\documentclass[letterpaper, 10 pt, conference]{ieeeconf}  % Comment this line out if you need a4paper

\IEEEoverridecommandlockouts                              % This command is only needed if 
\usepackage{graphics} % for pdf, bitmapped graphics files
\usepackage{epsfig} % for postscript graphics files
\usepackage{times} % assumes new font selection scheme installed
\usepackage{algorithm,algorithmic}

\usepackage{amsmath}
\usepackage{esint}
\usepackage{amssymb}
\usepackage{amsthm}
\usepackage{mathtools}
\usepackage{derivative}
\usepackage{xcolor}
\usepackage{bm}
\usepackage{mathrsfs}
\usepackage{physics} 

\usepackage{graphicx}   % for \includegraphics
\usepackage{caption}    % (optional) for better control of captions
\usepackage{subcaption} % (optional) if you plan to do subfigures

\usepackage[noadjust]{cite}
\usepackage{url}
\usepackage[hidelinks]{hyperref}

\usepackage{graphicx}
\usepackage[export]{adjustbox} % For aligning figures
\graphicspath{{figures/}}
\usepackage{stfloats}
\usepackage{booktabs} % For professional rules (\toprule, \midrule, \bottomrule)
\usepackage{siunitx}  
\usepackage[inkscapelatex=false]{svg}
\usepackage{multirow}
\usepackage{diagbox}

\newtheorem{theorem}{Theorem}

\theoremstyle{definition}
\newtheorem{definition}{Definition}

\newtheorem{remark}{Remark}

\DeclareMathOperator*{\argmax}{arg\,max}
\DeclareMathOperator*{\argmin}{arg\,min}

\newcommand{\negone}{\scalebox{0.5}[1.0]{$-$} \hspace{-0.2mm} 1}

\newcommand{\bs}[1]{\boldsymbol{ #1 }}

\newcommand{\Sp}{\mathbb{S}}
\newcommand{\C}{\mathcal{C}}

\title{\LARGE \bf Layered Safety: Enhancing Autonomous Collision Avoidance \\ via Multistage CBF Safety Filters}
\author{Erina Yamaguchi, Ryan M. Bena, Gilbert Bahati, and Aaron D. Ames% <-this % stops a space
\thanks{This research is supported by BP.}% <-this % stops a space
\thanks{Authors are with the Department of Mechanical and Civil Engineering and the Department of Aerospace, Caltech, Pasadena, CA 91125, USA,  \{\tt\small erinay, ryanbena, gbahati, ames\}@caltech.edu}%
}

\begin{document}

\maketitle
\thispagestyle{empty}
\pagestyle{empty}

%%%%%%%%%%%%%%%%%%%%%%%%%%%%%%%%%%%%%%%%%%%%%%%%%%%%%%%%%%%%%%%%%%%%%%%%%%%%%%%%
\begin{abstract}
% Autonomous collision avoidance in unstructured dynamic environments is a challenging task, requiring robots to efficiently fuse real-time perception, planning, and control. % in the presence of sensor noise with limited onboard compute.
% %
% To tackle this challenge, we 
This paper presents a general end-to-end framework for constructing robust and reliable layered safety filters that can be leveraged to perform dynamic collision avoidance over a broad range of applications using only local perception data.
Given a robot-centric point cloud, we begin by constructing an occupancy map which is used to synthesize a \textit{Poisson safety function} (PSF). The resultant PSF is employed as a \textit{control barrier function} (CBF) within two distinct safety filtering stages.
In the first stage, we propose a predictive safety filter to compute optimal safe trajectories based on nominal potentially-unsafe commands. The resultant short-term plans are constrained to satisfy the CBF condition along a finite prediction horizon.
In the second stage, instantaneous velocity commands are further refined by a real-time CBF-based safety filter and tracked by the full-order low-level robot controller. 
Assuming accurate tracking of velocity commands, we obtain formal guarantees of safety for the full-order system. 
% Our method leverages \textit{reduced-order models}—simplified representations capturing dominant system dynamics—to enable generalizable safety synthesis via velocity commands while maintaining guarantees for the full-order system through accurate tracking. 
%
We validate the optimality and robustness of our multistage architecture, in comparison to traditional single-stage safety filters, via a detailed Pareto analysis. We further demonstrate the effectiveness and generality of our collision avoidance methodology on multiple legged robot platforms across a variety of real-world dynamic scenarios.

\end{abstract}

% {\color{blue}

% \begin{itemize}
%     \item "monolithic" MPC $\rightarrow$ perfect state knowledge is optimal
%     \item but in hardware...:
%     \begin{itemize}
%         \item "monolithic" MPC [bad because of non perfect perception]
%         \item safety filter [myopic behavior]
%         \item Multi-layer safety filter (MPC and real time safety-filter)
%     \end{itemize}
% \end{itemize}
% }

%%%%%%%%%%%%%%%%%%%%%%%%%%%%%%%%%%%%%%%%%%%%%%%%%%%%%%%%%%%%%%%%%%%%%%%%%%%%%%%%
\section{INTRODUCTION}
% {\color{blue}
% Talk about:
% \begin{itemize}
%     % \item Robot collision avoidance online, with environmental data coming directly from onboard sensors
%     % \item LiDAR denoising
%     % \item Safety from LiDAR: Learning-based, CBF-based Safety Filters, etc
%     \item Poisson Safety Functions (PSF)
%     \item Occupancy Maps
%     \item We extend PSF from RBG input to LiDAR input
% \end{itemize}
% }
As the deployment of robotic systems continues to evolve from controlled laboratories to unstructured real-world environments, autonomous safe operation in dynamic settings becomes crucial. For legged robots and humanoids, collisions can compromise task objectives, necessitating the constructive and robust integration of perception, planning, and control under sensor noise and partial observations. This is particularly challenging when considering the task of collision avoidance using onboard sensing and computation, where the robot must continually acquire environmental occupancy information, process it into a reliable safety representation, and translate this into safe control actions.
% Given these occupancy maps, Poisson Safety Functions \cite{bahati2025dynamic} provide a continuous representation of safety that can be used to generate safe control actions. 
%
% {\color{blue}

% % In perception-in-the-loop applications, sensor measurements provide sampled observations of the surrounding physical environment, which must be converted into a representation providing obstacle occupancy information.
% % %
% % Many occupancy mapping methods have been developed \cite{}, including probabilistic approaches for uncertain and dynamic environments\cite {}. 
% % These methods provide rich representations of occupancy.
% %
% % In the present work, occupancy mapping serves as the perception layer through which LiDAR sensor measurements are converted into a representation suitable for real-time safety-critical control.
% % but often address broader mapping objectives.. While the present work requires a representation tailored to real-time safety-critical control.

% % In this work, we use LIDAR point-cloud measurements to construct a computationally efficient perception interface for our proposed control architecture.. enabling the robot to continually acquire environmental
% % occupancy information, process it into a reliable safety
% % representation, and translate this into safe control actions.

% }
% This work aims to provide a modular framework for dynamic collision avoidance from LiDAR point clouds; integrating perception, safety function synthesis, trajectory generation and control into a unified pipeline.

\begin{figure}[t!]
    \centering   
    % \includegraphics[width=1\linewidth]%{example-image-a}
    % \vspace{-6mm}
    \includegraphics[width=\linewidth]{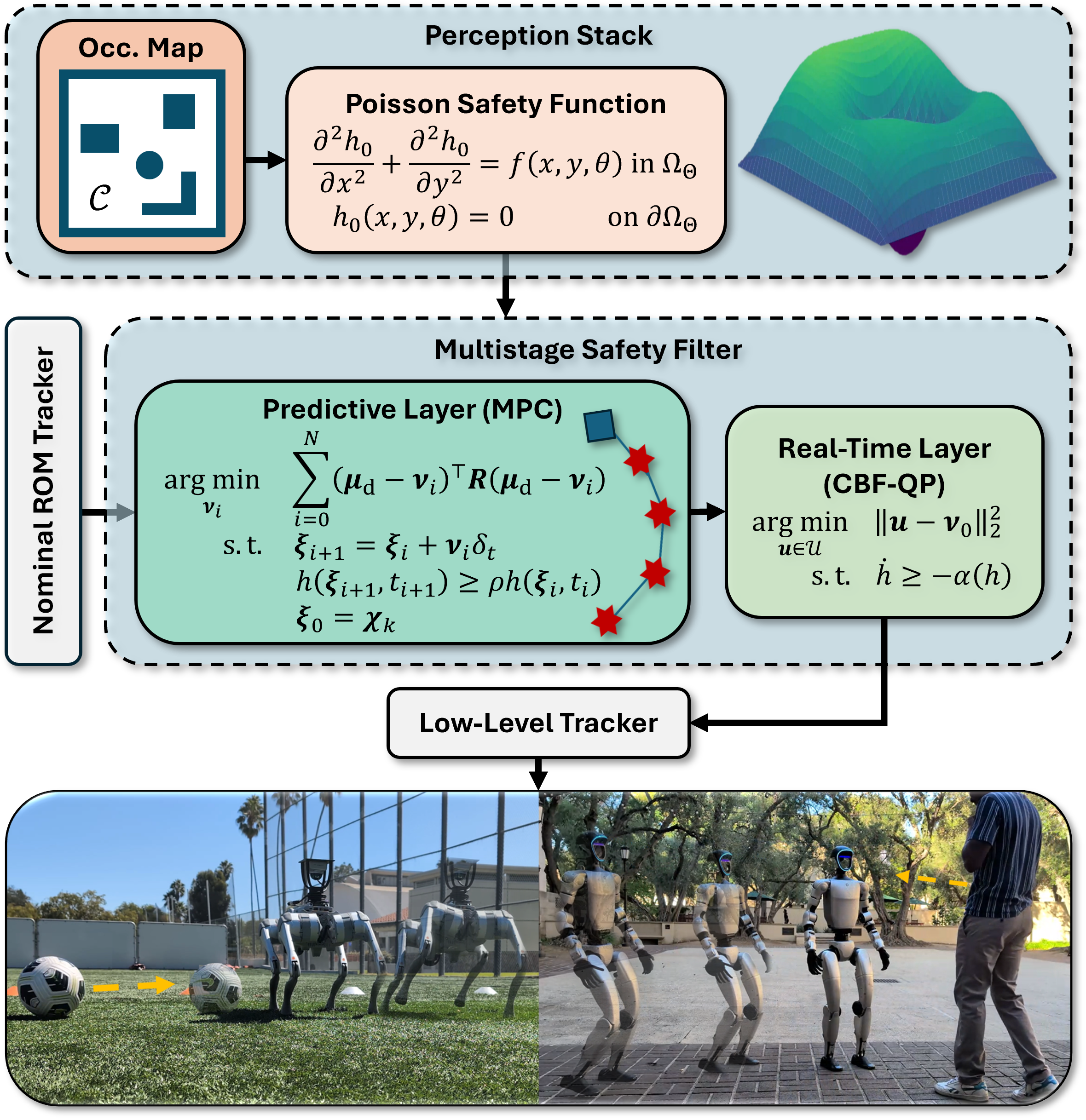}
    \caption{\footnotesize Layered Safety Filter. The PSF is computed using an occupancy map safe set representation. The solution enables the construction of safety constraints for use in a multistage safety filter: a predictive layer in series with a real-time layer. This framework generates safe autonomous actions, leading to collision-free behavior around dynamic obstacles. Experimental footage provided at https://youtu.be/8v7uyheOM2k}
    \label{fig:hero_figure}
    \vspace{-7mm}
\end{figure}

On this front, control barrier function (CBF) methods \cite{ADA-XX-JWG-PT:17,ADA-SC-ME-GN-KS-PT:19} provide a computationally efficient mechanism for enforcing real-time safety and have been demonstrated across many platforms \cite{tamas2022model, cohen2025safety, cohen2024safety, cohen2024constructive, bahati2025control, bahati2025dynamic}. However, most CBF-based methods operate myopically, relying on a quadratic program (QP) to optimize instantaneous safety, which can result in conservative behavior, nonpersistent feasibility, or task failure when immediate safe actions conflict with longer-term goals \cite{reis2020control, mestres2025control}. 
Model predictive control (MPC) \cite{borrelli2017predictive} mitigates this limitation by optimizing a future trajectory using safety-assured planning strategies \cite{ren_safety-assured_2025}, including enforcing discrete-time CBF constraints along a receding finite horizon \cite{zeng_dtcbfMpc_2021, ma2021feasibility, zeng2021enhancing, liu2022iterative, wabersich2022predictive, abdi2024model}. % MPC has demonstrated improved closed-loop feasibility and robustness compared to either method individually \cite{cosner2023robust}.
However, MPC exhibits fragility due to uncertainties about future environments and the challenges in solving highly nonconvex optimization problems. Furthermore, most existing safety-critical control methodologies handle the specification of safety constraints \textit{a priori}, typically through hand-designed geometric descriptions or signed distance fields \cite{long2021learning, jian2022dynamic, dawson2022learning, DeSa24, KeyumarsiRAL24, TooranjiACC25, yang2025shield, LiuGrizzle23}. 
In this paper, we address the real-time synthesis of safety constraints using the Poisson safety function (PSF) method \cite{bahati2025dynamic, bena2025geometry}, which produces smooth functional safety representations directly from perception data, enabling systematic CBF synthesis. Critically, the PSF approach produces a dynamically-updated unified collision avoidance constraint. 
% PSFs have been demonstrated for predictive safety filtering within MPC \cite{bena2025geometry}, but not within a unified multi-layer ROM-based architecture.

High-dimensional dynamics add complexity when ensuring safety. For legged robots, most approaches rely on full-order dynamics \cite{ChiuHutter22, gaertner2021collision}, typically rendering online planning and safety constraint synthesis computationally prohibitive and limiting reaction time. Furthermore, %full-order models may be imperfect or computationally intensive to evaluate, and 
in practice, safety filters often lack access to the full-order control inputs, only passing reference commands to low-level controllers. To overcome these limitations, recent approaches employ learning-based policies \cite{reflexiveBot} at the expense of formal safety guarantees. Instead, a layered architecture approach is introduced in \cite{grandia2021multi} combining low-frequency prediction with high-frequency safety enforcement. However, their formulation relies on knowledge of platform-specific dynamics with a custom low-level controller. To enhance generality, we leverage reduced-order models (ROMs) \cite{TamasRAL22, cohen2025safety, compton25a, cohen2024safety, esteban2025layered} that capture dominant dynamics while enabling tractable synthesis of safety constraints. This viewpoint decouples safety synthesis from full-order modeling: safe commands are planned on the ROM and safety for the full-order dynamics follows under accurate tracking of the ROM commands by the robot, yielding a \textit{model-free} architecture.

% {\color{blue}
% Building on these foundations, this work presents a multistage safety filter, embedded in an end-to-end perception-to-control architecture, that combines MPC-like predictive planning with CBF-QP real-time safety. This architecture is illustrated as a block diagram in Fig. \ref{fig:hero_figure}.

Building on these foundations, this work presents a multistage safety filter combining MPC-like predictive planning with CBF-QP real-time safety, embedded in an end-to-end perception-to-control architecture as illustrated in Fig. \ref{fig:hero_figure}. 
In the proposed model-free layered pipeline, a predictive safety filter generates safe nominal commands, that are then refined by a real-time CBF-QP safety filter.
This architecture unifies perception-based CBF synthesis with predictive and real-time CBF-based safety filtering, addressing a gap in existing approaches, typically emphasize only one part of this pipeline.
Existing occupancy mapping methods provide rich environment representations but lack constructive interfaces for real-time CBF synthesis. ROM-based CBF approaches enable computationally efficient real-time safety filtering \cite{TamasRAL22, cohen2025safety, compton25a, cohen2024safety, esteban2025layered}, but often do not incorporate perception or predictive planning. Predictive safety approaches account for long-horizon behavior, but become computationally expensive when combined with real-time safety under full-order dynamics \cite{grandia2021multi}.
% This architecture addresses limitations of existing perception-based CBF methods, which lack predictive planning, and ROM-based approaches, which typically employ either real-time filtering without prediction \cite{TamasRAL22, cohen2025safety, compton25a, cohen2024safety, esteban2025layered} or predictive planning with computationally expensive full-order real-time safety filtering \cite{grandia2021multi}.
%
%
%
%
Our contributions are summarized as follows:
\begin{itemize}
    \item \textit{Perception-based safety synthesis:} 
    % {\color{blue}A real-time point-cloud-based occupancy mapping algorithm from which Poisson safety functions synthesize CBF constraints with geometric parameterization.}
%
A real-time point-cloud-based occupancy mapping method for dynamic environments, enabling the synthesis of a PSF that serves as a CBF with geometric parameterization.
    % {\color{blue}
    % In the present work, occupancy mapping serves as the perception layer through which LiDAR sensor measurements are converted into a representation suitable for real-time safety-critical control.}
    \item \textit{Multistage model-free control architecture:} A cascaded MPC-based predictive safety filter and a real-time CBF-QP safety filter on a ROM, with theoretical guarantees of accurate tracking by the full-order robot dynamics.
    \item \textit{Performance analysis:} Characterization of performance optimality and robustness tradeoffs via a detailed Pareto optimality analysis, demonstrating the benefits of the layered approach over traditional single-stage methods.  
    \item \textit{Hardware verification \& validation:} Extensive demonstrations on quadrupedal and humanoid platforms performing dynamic collision avoidance in real-world environments with fully-onboard sensing and computation.
\end{itemize}

The structure of the paper is as follows: Section \ref{sect:background} reviews CBFs for safety-critical control, PSFs for perception-based safety, and MPC. Section \ref{sect:architecture} introduces the proposed perception and multistage control framework. The multistage architecture is validated experimentally in Section \ref{sect:experiment}. Finally, conclusions are in Section \ref{sect:conclusion}.

% {\color{blue}
% \noindent
% \textit{Related Work:}
% Existing approaches to dynamic collision avoidance from perception can be categorized by their architecture designs. Most work using CBFs with LiDAR \cite{long2021learning, jian2022dynamic, dawson2022learning, DeSa24, KeyumarsiRAL24, TooranjiACC25, yang2025shield, LiuGrizzle23} lack a predictive planning layer and thus, are limited in thier realtime capability in dynamic environments. ROM-based safety architectures \cite{TamasRAL22,cohen2025safety,compton25a,cohen2024rom,esteban2025layered}
% focus on real-time filtering without mid-level prediction, while \cite{grandia2021multi} uses ROM-based planning but retains the high-dimensional full-order model for low-level safety filtering. Our framework uniquely combines PSF-based safety synthesis \cite{bahati2025dynamic, bena2025geometry} with ROM-based MPC and real-time filtering in a unified perception-to-control pipeline for dynamic environments.
% }
% \newcommand{\setmap}[3]{#1:#2 \mathrel{\vcenter{\mathsurround0pt
% \ialign{##\crcr
% 		\noalign{\nointerlineskip}$\rightarrow$\crcr
% 		\noalign{\nointerlineskip}$\rightarrow$\crcr
% 		}}}%
% 		#3}

\newcommand{\naturals}{\mathbb{N}}
\newcommand{\re}{\mathbb{R}}
\newcommand{\R}{\mathbb{R}}
\newcommand{\realnonneg}{\mathbb{R}_{\ge 0}}
\newcommand{\realpos}{\mathbb{R}_{> 0}}
\newcommand{\until}[1]{[#1]}
\newcommand{\map}[3]{#1:#2 \rightarrow #3}
\newcommand{\qedA}{~\hfill \ensuremath{\square}}
\newcommand\scalemath[2]{\scalebox{#1}{\mbox{\ensuremath{\displaystyle #2}}}}
\newcommand{\interior}{\operatorname{int}}

\newcommand{\longthmtitle}[1]{\mbox{}{\textit{(#1):}}}
\newcommand{\setdef}[2]{\{#1 \; | \; #2\}}
\newcommand{\setdefb}[2]{\big\{#1 \; | \; #2\big\}}
\newcommand{\setdefB}[2]{\Big\{#1 \; | \; #2\Big\}}
\newcommand*{\SetSuchThat}[1][]{} % reserve the name
\newcommand*{\MvertSets}{%
    \renewcommand*\SetSuchThat[1][]{%
        \mathclose{}%
        \nonscript\;##1\vert\penalty\relpenalty\nonscript\;%
        \mathopen{}%
    }%
}
\MvertSets % default
% \DeclarePairedDelimiterX \Set [2] {\lbrace}{\rbrace}
%     {\,#1\SetSuchThat[\delimsize]#2\,}

\newcommand{\dt}{\mathrm{d}t}
\newcommand{\dy}{\mathrm{d}y}
\newcommand{\dx}{\mathrm{d}x}
\newcommand{\dtau}{\mathrm{d}\tau}
\newcommand{\Cc}{\mathcal{C}}
\newcommand{\Ac}{\mathcal{A}}
\newcommand{\pCc}{\partial \mathcal{C}}
\newcommand{\Bc}{\mathcal{B}}
\newcommand{\Tc}{\mathcal{T}}
\newcommand{\Dc}{\mathcal{D}}
\newcommand{\Kc}{\mathcal{K}}
\newcommand{\Oc}{\Omega}
\newcommand{\Occ}{\overline{\Omega}}
\newcommand{\pOc}{\partial \Omega}
\newcommand{\Ocext}{\Oc_\mathrm{ext}}
\newcommand{\Ocint}{\Oc_\mathrm{int}}
\newcommand{\Hc}{\mathcal{H}}
\newcommand{\Fc}{\mathcal{F}}
\newcommand{\Mc}{\mathcal{M}}
\newcommand{\Nc}{\mathcal{N}}
\newcommand{\Pc}{\mathcal{P}}
\newcommand{\Uc}{\mathcal{U}}
\newcommand{\Sc}{\mathcal{S}}
\newcommand{\Xc}{\mathcal{X}}
\newcommand{\Yc}{\mathcal{Y}}
\newcommand{\Vc}{\mathcal{V}}
\newcommand{\Zc}{\mathcal{Z}}
\newcommand{\Lc}{\mathcal{L}}
\newcommand{\Rm}{\mathcal{\mathbb{R}}}

\newcommand{\divv}{\nabla \cdot \vec{\bv}}
\newcommand{\hs}{h_\mathrm{\Sc}}

\newcommand{\defeq}{\triangleq}

\newcommand{\vr}{\varepsilon}
\newcommand{\nom}{{\operatorname{nom}}}
\newcommand{\des}{{\operatorname{des}}}
\newcommand{\on}{{\operatorname{on}}}
\newcommand{\off}{{\operatorname{off}}}
\newcommand{\fl}{{\operatorname{FL}}}
\newcommand{\Lie}{\mathcal{L}}
\newcommand{\qp}{{\operatorname{QP}}}

\newcommand{\ie}{i.e., }
\newcommand{\todo}[1]{{\color{cyan} Todo: #1}}

% Bold math shortcuts
% \newcommand{\ba}{\mathbf{a}}
\newcommand{\bb}{\mathbf{b}}
\newcommand{\be}{\mathbf{e}}
\renewcommand{\bf}{\mathbf{f}} % NOTE: REDEFINED \bf command
\newcommand{\bff}{\mathbf{f}}
\newcommand{\bg}{\mathbf{g}}
\newcommand{\bk}{\mathbf{k}}
\newcommand{\bp}{\mathbf{p}}
\newcommand{\bq}{\mathbf{q}}
\newcommand{\dq}{\dot{\mathbf{q}}}
\newcommand{\ddq}{\ddot{\mathbf{q}}}
\newcommand{\bu}{\mathbf{u}}
\newcommand{\bv}{\mathbf{v}}
\newcommand{\bvv}{\vec{\mathbf{v}}}
\newcommand{\bn}{\mathbf{n}}
\newcommand{\hbn}{\hat{\mathbf{n}}}

\newcommand{\bx}{\mathbf{x}}
\newcommand{\bz}{\mathbf{z}}
\newcommand{\br}{\mathbf{r}}
\newcommand{\bA}{\mathbf{A}}
\newcommand{\bB}{\mathbf{B}}
\newcommand{\bD}{\mathbf{D}}
\newcommand{\bC}{\mathbf{C}}
\newcommand{\bF}{\mathbf{F}}
\newcommand{\bJ}{\mathbf{J}}
\newcommand{\bG}{\mathbf{G}}
\newcommand{\bK}{\mathbf{K}}
\newcommand{\bP}{\mathbf{P}}
\newcommand{\bW}{\mathbf{W}}
\newcommand{\bw}{\mathbf{w}}
\newcommand{\bd}{\mathbf{d}}
\newcommand{\bvy}{\vec{\by}}
\newcommand{\bty}{\tilde{\by}}
\newcommand{\bbeta}{\boldsymbol{\eta}}
\newcommand{\mb}[1]{\mathbf{#1}}

\newcommand{\bY}{\mathbf{Y}}
\newcommand{\by}{\mathbf{y}}
\newcommand{\byobs}{\mathbf{y}_\mathrm{obs}}
\newcommand{\bl}{\mathbf{\lambda}}

\newcommand{\bxd}{\bx_\mathrm{d}}
\newcommand{\bxobs}{\bx_\mathrm{obs}}
\newcommand{\md}{\mathrm{d}}

\newcommand{\Uxd}{U_{\mathrm{d}}}
\newcommand{\Uobs}{U_{\mathrm{obs}}}
\newcommand{\Uapf}{U_{\mathrm{APF}}}

\newcommand{\GradUxd}{\nabla U_{\mathrm{d}}}
\newcommand{\GradUobs}{\nabla U_{\mathrm{obs}}}
\newcommand{\GradUapf}{\nabla U_{\mathrm{APF}}}

% \SetKwComment{Comment}{/* }{ */}
\newcommand{\cmax}{c_\mathrm{max}}
\newcommand{\cmin}{c_\mathrm{min}}

\newcommand{\bch}{\bs{\chi}}

% \clearpage
\section{Background}\label{sect:background}
%We begin by reviewing the framework for safety-critical control based on CBFs. 
We consider nonlinear control affine systems of the form:
\begin{align}\label{eq: nl system}
    \dot{\bx} = \bf(\bx) + \bg(\bx)\bu,
\end{align}
where $\bx \in \re^n$ denotes the state vector, $\bu \in \re^m$ represents the control input, $\bf:\re^n \rightarrow \re^n$ denotes the drift dynamics, and $\bg:\re^n \rightarrow \re^{n \times m}$ is the actuation matrix, both assumed to be locally Lipschitz continuous. A locally Lipschitz continuous controller, $\bk:\re^n \times \re_+\rightarrow \re^m $, yields the closed loop system $\dot{\bx} = \bf(\bx) + \bg(\bx)\bk(\bx,t)$ which, given an initial condition, admits a unique solution $t\mapsto \bx(t)$ 
which we assume exists for all time for brevity \cite{perko2013differential}.

\subsection{Safety-Critical Control}
We formalize the notation of safety through the lens of \textit{forward invariance}, where system trajectories $t\mapsto \bx(t)$ are required to remain in a \textit{ safe set }for all time. We consider time-varying safe sets defined as the $0$-super level set of a function  $h:\re^n \times \re_{+} \rightarrow \re$ denoted by:
\begin{align}
    \Cc_t = \{\bx \in \re^n: h(\bx,t) \geq 0 \}.
\end{align}
CBFs provide a constructive framework for designing controllers that enforce forward invariance.

\begin{definition}(Control Barrier Function~\cite{ADA-XX-JWG-PT:17}) Let $\Cc_t \subset \re^n$ be the time-varying $0$-super-level set of a continuously differentiable function  $h:\re^n \times \re_+ \rightarrow \re$ satisfying $\nabla h(\bx,t) \neq \mathbf{0}$ when $h(\bx,t) =0$. We call $h$ a time-varying CBF for \eqref{eq: nl system} if there exists\footnote{\small A continuous function $\alpha : \R \to \R$ belongs to the extended class $\Kc_{\infty}^{e}$ if it is monotonically increasing, $\alpha(0) = 0$, $\lim_{s \to \infty} \alpha(s) = \infty$, and $\lim_{s \to -\infty} \alpha(s) = -\infty$.}
 $\alpha \in \Kc_{\infty}^{e}$ such that for all $(\bx,t) \in \re^n \times \re_+$:
\begin{align}\label{CBF condition}
    \sup_{\bu \in \re^m} \dot h(\bx,t,\bu) \coloneq \nabla &h (\bx,t) \cdot (\bf(\bx) + \bg(\bx) \bu) \nonumber \\& + \frac{\partial h}{\partial t}(\bx,t) \geq - \alpha(h(\bx,t)).
\end{align}
% 
% \begin{align}\label{CBF condition}
%     h(\bx,t,\bu)  = \nabla h (\bx,t) \cdot (\bf(\bx) + \bg(\bx) \bu) + \frac{\partial h}{\partial t}(\bx,t)
% \end{align}
% holds for all $\bx \in \re^n$.
\end{definition}

Given a nominal (possibly unsafe) controller $\bk_\mathrm{nom}:\re^n \times \re_+ \rightarrow \re^m$, one way of constructing a safe controller is via the CBF-QP that adjusts $\bk_\mathrm{nom}$ to its nearest safe action:
% the quadratic programming-based controller that adjusts $\bk_\mathrm{nom}$ to its nearest safe action through the nominal controller:
%
\begin{align*}\label{eq: safety filter}
    \bk(\bx,t) = &\argmin_{\bu \in \re^m} &&\|\bu - \bk_{\mathrm{nom}}(\bx,t)\|_2^2 \tag{CBF-QP} \\
    % &\quad \quad \quad \quad \ \\
    & \quad 
 \mathrm{s.t.} && \dot h(\bx,t,\bu)   \geq - \alpha(h(\bx,t)).
\end{align*}
%
% \subsection{Outputs and Relative Degree}
% We focus on systems for which safety specifications are expressed via desired  \textit{outputs} \cite{isidori1985nonlinear}. In particular, we consider safety specifications described in spatial coordinates:
% \begin{align}\label{eq: spatial coordinates}
% \by(\bx) = (x,y,z) \in \re^3   
% \end{align}
% %
% Under appropriate relative degree assumptions, methods such as \cite{bahati2025dynamic, bahati2025control, cohen2024constructive} provide constructive techniques for designing CBFs for systems \eqref{eq: nl system} with outputs \eqref{eq: spatial coordinates}. 
%
% The spatial coordinates \eqref{eq: spatial coordinates} provide a setting enabling perception-based CBF synthesis methods \cite{bahati2025dynamic}, which we discuss next. 
% methods to 
% The next subsection discusses the synthesis of CBFs for safety specifications \eqref{eq: spatial coordinates} directly from perception data, as developed in \cite{bahati2025dynamic}.
% {\color{blue} Discuss reduced-order-models..
% }

% Building on this foundation, we present a method for synthesizing safe sets and corresponding CBFs for spatial safety specifications directly from perception data, as introduced in \cite{bahati2025dynamic}.

\subsection{Perception-based Safety with Poisson Safety Functions}

While CBFs provide a principled tool for synthesizing controllers that guarantee safe actions, their success in robotic applications rely on the availability of a function $h_0$ to encode physical environmental safety specifications from perception data in spatial coordinates $\by(\bx) = (x,y,z) \in \re^3$. 
%
% -based safety specifications  defining a desired safe set $\Cc_0 \subset \re^3$.
%
% reliable environment representations derived from perception-based safety specifications  in spatial coordinates $\by(\bx) = (x,y,z) \in \re^3$ defining a desired safe set $\Cc_0 \subset \re^3$.
% in the form of occupancy information. 
%
Given occupancy information from perception, PSFs \cite{bahati2025dynamic} provide such safety representations, bridging the gap between perception and CBF-based safety.

%
% In collision avoidance applications, safety specifications are often inferred from  environmental perception data in the form of occupancy information via sensors, characterizing regions of space that are occupied or free in spatial coordinates $\by(\bx) = (x,y,z) \in \re^3  $.
%
%
% without an explicit functional representation. 
% %
% More often, general occupancy information for the local environment is collected in realtime, either via onboard sensors or global perception systems.
% In this work, we focus on systems for which safety specifications are expressed via desired  \textit{outputs} \cite{isidori1985nonlinear} described in spatial coordinates:
% \begin{align}\label{eq: spatial coordinates}
% \by(\bx) = (x,y,z) \in \re^3   
% \end{align}
%
%
% Given occupancy data, PSFs provide a functional representation of safety by solving a boundary value problem for Poisson's equation.

% In particular, given an occupancy map, let $\Oc \in \re^3$ be an open, bounded and connected set defining free space, and $\pOc$ represent obstacle surfaces. A PSF $h_0:\re^3 \rightarrow \re$ is defined as the unique solution to the following Dirichlet problem for Poisson's equation:

\begin{definition}(Poisson Safety Function \cite{bahati2025dynamic})
Given an occupancy map, let $\Oc \subset \re^3$ denote the open, bounded, and connected set of free space, with smooth boundary $\pOc$ corresponding to obstacle surfaces. A PSF, $h_0:\re^3 \rightarrow \re$, is defined as the unique solution Dirichlet problem for Poisson's equation:
\begin{gather}\label{eq: poisson's eq}
\left \{
    \begin{aligned}
        \Delta h_0(\by) &= f(\by)& \text{ in } \Omega,\\
        h_0(\by) &= 0 &  \text{ on } \partial \Omega, \\
    \end{aligned}
    \right.
\end{gather}
where \( \Delta = \frac{\partial^2 }{\partial x^2} + \frac{\partial^2 }{\partial y^2} + \frac{\partial^2 }{\partial z^2}\) is the \textit{Laplacian} and $f: \Oc \rightarrow \re_{<0}$ is a given forcing function. 
\end{definition}
The boundary conditions in \eqref{eq: poisson's eq} assigns the $0$-level set of $h_0$ to obstacle surfaces, while $f(\by) <0$ guarantees $h_0(\by) > 0$ throughout the free space. As discussed in \cite{bahati2025dynamic}, a smooth forcing function $f \in C^\infty(\Occ)$, yields smooth solutions to \eqref{eq: poisson's eq}, $h_0 \in C^\infty(\Occ)$. It follows that $h_0$ defines a CBF for systems with single-integrator dynamics (\ie $h_0 \coloneqq h$), and its smoothness allows for CBF extensions for higher-order systems  \cite{bahati2025dynamic}. For dynamic environments, \eqref{eq: poisson's eq} can be parametrized with time to account for temporal variations yielding a time-varying safety function $t \mapsto h_0(\by,t)$ \cite{bena2025geometry}.

\subsection{Model Predictive Control}

Controllers based on predictive optimization enable the parallel enforcement of both safety and performance goals. In our collision avoidance framework, we employ an MPC-based \emph{predictive safety filter} that incorporates the discrete-time CBF (DCBF) as in \cite{zeng_dtcbfMpc_2021,cosner2025dynamic}, enforcing safety over a time horizon. The following finite-time optimal control problem (FTOCP) with a DCBF constraint along a horizon of length $N \in \mathbb{N}_{\geq 1 }$ defines the predictive safety filter:
\begin{align}
    \underset{\bs{\nu}_{0:N-1} \in \R^m }{\underset{\bs{\xi}_{0:N} \in \R^n }{\min }} & \quad \sum_{i=0}^{N-1} c(\bs{\xi}_i, \bs{\nu}_i) + V(\bs{\xi}_N) \\
    \textup{s.t. } & \quad \bs{\xi}_{i+1} = \mb{F}(\bs{\xi}_i, \bs{\nu}_i), \quad \forall i \in \{ 0, \dots, N-1\}  \nonumber\\
    & \quad h(\bs{\xi}_{i+1}) \geq \rho h(\bs{\xi}_i), \quad \forall i \in \{ 0, \dots, N-1\} \nonumber\\
    % & \quad \bs{\nu}_{i} \in \mathcal{U}, \quad \forall i \in \{ 0, \dots, N-1\} \nonumber \\
    & \quad \bs{\xi}_0 = \bs{x}_k \nonumber 
\end{align}

\noindent where $c: \R^n \times \R^m \to \R$ is the stage cost, $V: \R^n \to \R$ is the terminal cost used to approximate the infinite-horizon optimal control problem, $\bs{\xi}_i \in \R^n $, $\bs{\nu}_i \in \R^m $ represent the planned state and inputs at discrete times, and $\bF: \re^n \times \re^m \rightarrow \re^n$ is the discretization of \eqref{eq: nl system} over a finite time interval. The first constraint ensures the planned trajectory is dynamically feasible, the second constraint is the DCBF condition that enforces safety along the trajectory, and the last constraint ensures the plan aligns with the current state. 

% To generate an input using this FTOCP, 
The optimal plan of inputs of this FTOCP is computed as $[\bs{\nu}_0^*(\bs{x}_k), \dots, \bs{\nu}_{N-1}^*(\bs{x}_k) ]$, and $\bs{\nu}_0^*(\bs{x}_k)$ is applied to the system, yielding the MPC+CBF controller: 
\begin{align}
    \bs{\pi}^{\textup{MPC+CBF}}(\bs{x}_k) = \bs{\nu}_0^*(\bs{x}_k).  
\end{align}
This $\bs{\pi}^\textup{MPC+CBF}$ controller displays favorable performance and robustness properties when compared to standard MPC or CBF controllers \cite{cosner2025dynamic} by enforcing the safety constraint in the FTOCP and optimizing across the planning horizon $N$.

% \clearpage

% \subsection{Reduced Order Models}
% We consider robotic systems with generalized coordinates $\bq \in \re^d$ and associated velocities $\dot{\bq} \in \re^d$ with dynamics:
% %
% \begin{align}
% \label{eq: mechanical system}
% M(\bq)\ddot{\bq} + C(\bq, \dot{\bq})\dot{\bq} +G(\bq) = \bu
% \end{align}
% %
% where $M(\bq) \in \re^{d \times d}$, $C(\bq, \dot{\bq})\in \re^{d \times d}$, $G(\bq)\in \re^{d}$  denote the inertial, coriolis matrices, and gravity terms respectively. 

% Constructing CBFs for the above (typically) high-dimensional system can be challenging.
% % since candidate barriers must reflect the behavior of the full-order dynamics to guarantee safety.
% %
% In practice, full-order models may be imperfect or computationally intensive to evaluate, which limits their use in real-time controllers. In many settings one also lacks direct access to the full-order control input and can only pass reference commands to modules within the autonomy stack that compute those inputs.
% %
% We address these challenges by exploiting a layered architecture and interpreting safety-critical control as certifying the ability of the full-order system to track inputs generated by suitably designed reduced-order models (ROMs) \cite{TamasRAL22,cohen2025safety,compton25a,cohen2024rom}. This viewpoint decouples safety synthesis from full-order modeling and yields a \textit{model-free} safety layer. Safety for the full-order dynamics then follows provided those dynamics track the ROM commands. 

\section{Multistage Safety Filter: Architecture}\label{sect:architecture}% In this section, we present a multi-stage safety framework for dynamic collision avoidance in robotic systems that integrates point-cloud based perception with Poisson-based safety representations, and constructive CBF synthesis with predictive and real-time based safety filtering. }

In this section, we present a multistage safety framework that unifies onboard perception, CBF synthesis, and MPC for dynamic collision avoidance in robotic systems.
Specifically, we build an occupancy map from point cloud data, construct a boundary value problem for Poisson's equation, and solve it to obtain a PSF. The resulting PSF serves as a CBF that is incorporated into both a \textit{predictive} safety layer---enforcing safety over a receding horizon via MPC---and a \textit{real-time} safety layer---enforcing instantaneous safety via a CBF-QP.

% MPC for real-time safety-critical control.

% In this section, we develop the constructive perception-to-control pipeline underlying the proposed layered safety filter, integrating robot-centric point-cloud perception, Poisson safety function synthesis, and predictive CBF-based control for real-time collision avoidance in dynamic environments. 

% The pipeline begins with an occupancy representation obtained from local point-cloud measurements, solves Poisson's equation to obtain a smooth safety representation, and uses the resulting Poisson safety function to construct geometry-parameterized CBF constraints for the predictive and real-time filtering layers.

\subsection{Occupancy Mapping}
\label{sec: Occupancy}

\begin{figure*} [t]
    \centering    \includegraphics[width=1\linewidth]{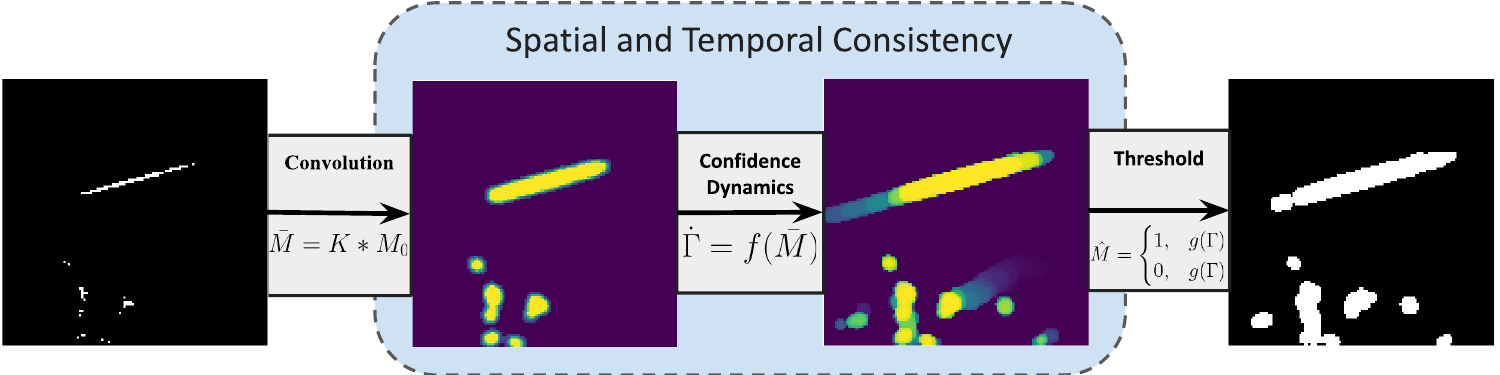}
    \vspace{-3mm}
    \caption{\footnotesize Occupancy Mapping. \textbf{Left} First, we project each point cloud onto a 2D grid $M_0$. \textbf{Middle Left} Second, we convolve $M_0$ with a Gaussian kernel $K$, adding weight to occupied clusters and producing the map $\bar{M}$. \textbf{Middle Right} Third, we use $\bar{M}$ to dynamically update the confidence map $\Gamma$. \textbf{Right} Last, we threshold $\Gamma$ with hysteresis to generate a binary occupancy map $\hat{M}$.}
    \label{fig: OccupancyMapping}
    \vspace{-6mm}
\end{figure*} 

Robotic systems collect local environment occupancy information in real-time via onboard sensors. This information is used to estimate
membership of a safe set, denoted as $\C_0\subset\R^2$, containing all unoccupied points $(x,y)\in\R^2$ in local two-dimensional environments. For perception-in-the-loop applications, the true set $\C_0$ cannot be known exactly; however, a discretized approximation of $\C_0$ can often be obtained in the form of an occupancy map. 

Let $\mathcal{S}$ be the set of grid cells in a local 2D occupancy map, and let $p \in \mathcal{S}$ be a single grid cell in the map corresponding to a fixed location in space. The occupancy map representing the true safe set $\C_0$ is the binary function $M : \mathcal{S} \to \{0,1\}$:
\begin{align}
    M(p) = \begin{cases}
        1, & \text{if cell $p \in \Sc$ is occupied}, \\[4pt]
        0, & \text{if cell $p \in \Sc$ is free}.
    \end{cases}
\end{align}
Next, we compute the occupancy map $\hat{M}$, which estimates $M$ from successive robot-centric point clouds.

\subsubsection{Instantaneous Point Map ($M_0$)}
\label{sec:projected2d}
To construct $\hat{M}$, we first remove ground and self detections, crop unmapped regions, and project each remaining detection onto the instantaneous 2D occupancy map $M_0 : \mathcal{S} \to \{0,1\}$ via:
\begin{align}
    \!M_0(p) := \begin{cases}
        1,\!& \text{if $p \in \Sc$ contains a point detection}, \\[4pt]
        0, \!& \text{otherwise}.
    \end{cases}
\end{align} 
\subsubsection{Spatially Convolved Map ($\bar{M}$)}

Generally, $M_0$ contains gaps and isolated cells that arise from noise, occlusion, and the limited angular resolution of perception-sensor scans. To improve spatial consistency, we convolve $M_0$ with an unnormalized compactly supported\footnote{A kernel is \textit{compactly supported} if it vanishes outside a finite radius $r$, so only neighboring cells within distance $r$ contribute to the convolution.} 
kernel $K$:
%$K: \mathcal{S}\times\mathcal{S} \rightarrow [0,1]$, defined as:
%
%\begin{align}\label{eq: Kernel}
%    K(p,q) = \kappa(d(p,q)),
%\end{align}
%where $p,q \in \mathcal{S}$ are grid cell indices, $d$ is a distance function, and 
%$\kappa:[0,\infty)\to[0,1]$ is a monotonically decreasing function with $\kappa(0)=1$.
%
\begin{equation}
    \bar{M} := K * M_0,
\end{equation}

\noindent where $*$ denotes discrete convolution operator. This operation smooths $M_0$ and amplifies regions of spatial consistency.
%If an occupied cell $p$ has no occupied neighbors, the convolution leaves $\bar M(p) = M_0(p) = 1$, while surrounding cells remain close to $0$. Conversely, if neighboring cells are occupied, $\bar M(p) > 1$ from summed contributions of other cells.
%Applying this kernel to $M_0$ yields the convolved occupancy grid $\bar{M} = K * M_0: \Sc \rightarrow \re_{\geq 0}$ where $*$ denotes discrete convolution:
%
%\begin{align}\label{eq:conv_int}
%    \bar M(p) = \sum_{q \in \mathcal{S}} K(p,q)\,M_0(q).
%\end{align}
% which clusters nearby occupied cells, fills small gaps from sparse measurements, and suppresses the effect of isolated measurements.
%
%If an occupied cell $p_0 \in \Sc$ has no occupied neighbors, convolution leaves $\bar M(p_0) = \tilde M(p_0) =1$, while surrounding cells remain close to $0$. If neighboring cells are occupied, $\bar M(p_0) > 1$ from summed contributions her cells. Thus, thresholdin$\bar{M}[p] > \tau > 1$) yields spatiathe ) > 1$ from summed con with lly consistent obstacle estimates: clusters are preserved, while isolated detections are filtered as $\tau$ increases.

\subsubsection{Dynamic Confidence Map ($\Gamma$)}
\label{sec:temporal_dynamics}
To enforce temporal continuity, we define a continuous \textit{confidence map }$\Gamma : \mathcal{S} \to [0,1]$, where $\Gamma(p)$ represents the confidence a cell $p \in \Sc$ is occupied. Given $\bar{M}$, the confidence dynamics are governed by the cell-wise continuous-time update law: 
\begin{align}
\label{eq: continuous_confidence}
   \dot{\Gamma}(p) := 
   \begin{cases}
       -\beta^-\Gamma(p) &\text{if } \bar{M}(p) < \sigma, \\
       \beta^+\bar{M}(p)\left(1-\Gamma(p)\right) &\text{if } \bar{M}(p) \geq \sigma,
   \end{cases}
\end{align}

\noindent 
% where $\beta^-,\beta^+ \in \re_{>0}$ are tunable gains representing nominal first-order parameters. 
% For ``free cells" the decay rate $\beta^-$ is fixed; meanwhile, for ``occupied cells", $\beta^+$ is augmented by the local value\footnote{This value corresponds to evidence of true occupancy in $M$.} of $\bar{M}$, producing faster growth of $\Gamma(p)$. 
%
with initial confidence $\Gamma_0(p)\in [0,1]$.
Here, $\beta^-,\beta^+\in\mathbb{R}_{>0}$ are tunable gains. When $\bar{M}(p)<\sigma$, $\Gamma(p)$ decays exponentially toward $0$ at rate $\beta^-$. When $\bar{M}(p)\ge\sigma$, $\Gamma(p)$ increases toward $1$, with growth scaled by $\beta^+\bar{M}(p)$, accelerating confidence accumulation in regions with stronger local evidence.
The switching parameter $\sigma \in \re_{>0}$ sets the level of spatial consistency required before confidence increases, emphasizing clusters while de-emphasizing isolated detections.

\subsubsection{Estimated Occupancy Map ($\hat{M}$)}
Binary thresholding is a natural choice for transforming the confidence map $\Gamma$ into the estimated occupancy map $\hat{M}: \Sc \rightarrow \{0,1\}$; however, confidence fluctuations near a threshold produce undesirable boundary noise. 
To mitigate this, we adopt \textit{hysteresis} thresholding, which accounts for the previous occupancy state:
\begin{align}\label{eq: hysteresis}
    \hat{M}(p) := \begin{cases}
        1 & \text{if } \Gamma(p) \geq \tau_h,\\
        \hat{M}_0(p) &\text{if } \tau_{\ell} < \Gamma(p) < \tau_h, \\
        0 & \text{if }  \Gamma(p) \leq \tau_{\ell},
    \end{cases}
\end{align}
\noindent where $\tau_h, \tau_\ell \in [0,1]$, $\tau_h > \tau_\ell$ are the upper and lower thresholds respectively, and $\hat{M}_0(p)$ denotes the previous occupancy estimate. 
This maintains the current occupancy state when confidence lies within $(\tau_h, \tau_\ell)$, requiring confidence to cross $\tau_h$ to become \textit{occupied} or fall below $\tau_\ell$ to become \textit{free}.
The resulting occupancy map $\hat{M}$ provides a spatially and temporally consistent estimate of the true occupancy $M$. 
A visual of the entire mapping procedure is depicted in Fig.\;\ref{fig: OccupancyMapping}.

% Map + Poisson
\subsection{Geometry-Aware Poisson Safety Function}
\label{sec: Poisson}

Consider the safe set $\C_0$ (and its discretized estimate $\hat{M}$). To enforce the forward invariance of $\C_0$, we generate a PSF \cite {bena2025geometry}. We begin by employing a rigid body ROM in which the robot is assumed to have a 2D centroidal position $(x, y)\in\re^2$ and orientation $\theta\in\Sp^1$. We parameterize the safe set in orientation using the Pontryagin difference:
\begin{equation}
    \C(\theta) = \C_0 \ominus \mathcal{R}(\theta),
\end{equation}
\noindent where $\mathcal{R}(\theta)\subset\R^2$ is the set of all points occupied by the robot, in body-centered coordinates, as a function of $\theta$. 

We aim to find a safety function $h_0 : \R^2\times\Sp^1 \rightarrow \R$, whose $0$-superlevel set characterizes $\C(\theta)\subset\R^2$ via:
\begin{equation}
\label{eq: Safe Set}
    \C(\theta) = \{(x,y)\in\R^2 : h_0(x,y,\theta) \geq 0 \}.
\end{equation}
\noindent We use a parameterized safe set $\C(\theta)$ to define an orientation-dependent domain $\Oc_{\Theta}$ via the lifting operation:
\begin{align}
    \!\!\!\Occ_{\Theta} \!= \!\bigcup_{\theta\in\Sp^1} \C(\theta) \times \{\theta\} \subset\R^2\times\Sp^1,
\end{align}
\noindent and formulate the Dirichlet problem for Poisson's equation:
\begin{equation}
\label{eq: Q Poisson}
\left\{
    \begin{aligned}
        \frac{\partial^2 h_0}{\partial x^2} + \frac{\partial^2 h_0}{\partial y^2} &= f(x,y,\theta) \quad \!\!\!\!&\forall(x,y,\theta)\in\Oc_{\Theta}, \\
        h_0(x,y,\theta) &= 0 \quad \!\!\!\!&\forall(x,y,\theta)\in\partial\Oc_{\Theta}.
    \end{aligned}
\right.
\end{equation}

\noindent 
Solving \eqref{eq: Q Poisson} yields a PSF $h_0$ that characterizes $\C(\theta)$ in according to \eqref{eq: Safe Set}, and renders $\C(\theta)$ forward invariant with respect to the dynamics describing $(x,y, \theta)$. 

Table \ref{tab:occupancy_comparison} compares our \textit{custom} mapping algorithm with two baselines: the instantaneous point map (Sec. \ref{sec:projected2d}), and the thresholded confidence map (Sec. \ref{sec:temporal_dynamics}). The former is generated by projecting raw measurements onto a 2D grid, naively treating measurement as ground truth. The latter represents a standard cost map, where cells track occupancy confidence which is updated using new measurements. The comparative baselines are therefore denoted \textit{naive} and \textit{standard}, respectively. In the \textit{static} scenario, each algorithm mapped an unchanging scene. In the \textit{dynamic} scenario, a ball was rolled through the environment. From both experiments, it is clear that, although the dynamic confidence update reduces noise, it does not effectively capture dynamic or sparsely-detected obstacles without spatial smoothing and hysteresis thresholding. Only our approach sufficiently de-noises the scene while retaining high ground-truth accuracy.

\begin{table}[h]
\centering
\caption{\footnotesize Comparison of mapping methods. \textit{RMS Noise} (static environment only) was computed using the RMS value of $\frac{\partial h}{\partial t}$ over all grid cells and frame transitions. $\frac{\partial h}{\partial t}=0$ in static scenes. \textit{Coverage \%} is the percent of ground-truth occupied cells, obtained from an overhead video, that are occupied in the map. For each scenario, the best noise/coverage are in \textbf{bold}.}
\label{tab:occupancy_comparison}
\scriptsize
\setlength{\tabcolsep}{5pt}

\resizebox{\columnwidth}{!}{
\begin{tabular}{l cc cc cc}
\toprule
\multirow{2}{*}{\textbf{Scenario}}
& \multicolumn{2}{c}{\textbf{Naive} (Projection)}
& \multicolumn{2}{c}{\textbf{Standard} (Confidence Map)}
& \multicolumn{2}{c}{\textbf{Custom} (Full Algorithm)} \\
\cmidrule(lr){2-3} \cmidrule(lr){4-5} \cmidrule(lr){6-7}
& \textbf{RMS Noise} & \textbf{Coverage \%}
& \textbf{RMS Noise} & \textbf{Coverage \%}
& \textbf{RMS Noise} & \textbf{Coverage \%} \\
\midrule
Static
& 0.249 & 47.40
& 0.181 & 39.45
& \textbf{0.118} & \textbf{100} \\

Dynamic
& -- & 10.93
& --  & 24.61
& -- & \textbf{97.79} \\
\bottomrule
\end{tabular}
}
\end{table}

\subsection{Predictive Safety Layer}
\label{sec: Controller}

In our predictive layer, we eliminate the need for a dynamic model by planning in the task space. Specifically, we plan safe velocity commands for the centroid configuration $\boldsymbol{\chi}=(x,y,\theta)\in\mathbb{R}^2\times\mathbb{S}^1$ governed by the integrator ROM:
% We eliminate the need for a custom dynamic model by leveraging a single-integrator reduced-order model (ROM) with state $\boldsymbol{\chi} =(x,y,\theta) \in\mathbb{R}^2\times\mathbb{S}^1$ consisting of the planar position and orientation of the robot's centroid, with dynamics:
%
% %
% We seek to design a safe controller by synthesizing and tracking a safe velocity $\dot{\bq}_s$. 
% This model considers the system outputs to be the fully-actuated system states $\bs{\chi}=[x,y,\theta]^\top\in\R^2\times\Sp^1$, resulting in the continuous-time dynamic model: 
%
\begin{equation}
    \label{eq: single integrator}
    \dot{\bs{\chi}}=\bs{\mu}, \qquad \bs{\mu}=(v_x,v_y,\omega)\in\R^3,
    % \dot{\bq}=\bs{\mu}
\end{equation}
\noindent
and rely on the low-level controller to track velocity references on the full-order model \cite{tamas2022model,cohen2025safety, cohen2024safety, compton25a}.
% We seek to design a safe controller by synthesizing and tracking a safe velocity $\dot{\bq}_s$. 
%
% Given environmental perception information, let $h_0:\mathbb{R}^2 \times [0,T] \to\mathbb{R}$ be a continuously differentiable, time-varying safety function whose $0$-superlevel set defines safe set:
% \[
% \mathcal{C}_t \;=\; \{\mathbf{q}\in\mathbb{R}^d \mid h_0(\mathbf{q},t) \ge 0\}.
% \]
%
Under these assumptions, the PSF $h_0$ is a CBF for the safe set $\C(\theta)$.

Given a nominal command $\bs{\mu}_\mathrm{d}$, we introduce a predictive safety planner for \eqref{eq: single integrator} which aims to enforce the forward invariance of a time varying set $\C'(\theta,t)\subset\re^2$ over a finite time horizon using DCBF constraints, while simultaneously minimizing the cumulative deviation from $\bs{\mu}_\mathrm{d}$: 
\begin{align}
\label{eq: NMPC}
    \underset{\bs{\nu}_i}{\min } & \quad \sum_{i=0}^N \left(\bs{\mu}_\mathrm{d}-\bs{\nu}_i\right)^\top\bs{R}(\bs{\mu}_\mathrm{d}-\bs{\nu}_i) \\
    \textup{s.t. } & \quad \bs{\xi}_0 = \bs{\chi}, \nonumber\\ & \quad \bs{\xi}_{i+1} = \bs{\xi}_i + \bs{\nu}_i\delta_t \quad \forall i\in[0,N],  \nonumber\\
    & \quad h(\bs{\xi}_{i+1},t_{i+1}) \geq \rho h(\bs{\xi}_i,t_i) \quad \forall i \in [0,N]. \nonumber 
\end{align}
\noindent $\bs{R}$ balances the value of various control actions; the time step $\delta_t\in\R_{>0}$ and the increment $N\in\mathbb{Z}^+$ determine the prediction horizon $T=N\delta_t$; the CBF parameter $\rho\in\left(0,1\right)$ affects dynamic robustness; and the initial $\bs{\xi}_0$ is constrained to match the state $\bch$ of the ROM. 

The time-dependent function $h$ accounts for the uncertain dynamic nature of $\C'(\theta, t)$ along the horizon $t \in [0,T]$ and is coarsely modeled using a first-order extrapolation:
\begin{align}
\label{eq: extrapolation}
    h(\bs{\chi}, t) := h_0(\bs{\chi}) + \frac{h_0(\bs{\chi})-h_{\negone}(\bs{\chi})}{t_0 - t_{\negone}}(t-t_0),
\end{align}
% \begin{align}
% \label{eq: extrapolation}
%     h(\bq, t) := h_0(\bq) + \frac{h_0(\bq)-h_{\negone}(\bq)}{t_0 - t_{\negone}}(t-t_0),
% \end{align}
%
\noindent where $h_0$ describes the environment at the current time $t_0\in\re$, and $h_{\negone} : \R^2\times\Sp^1 \rightarrow \R$ is the PSF computed at some previous time $t_{\negone}\in\re$ for $t_{\negone} < t_0$. 

The MPC problem \eqref{eq: NMPC}-\eqref{eq: extrapolation} can be solved with sequential quadratic programming (SQP). The first output in the SQP minimizer $\bs{\nu}_0^*$ is taken as the planned safe control action $\bs{\mu}_\mathrm{p}$. 

\subsection{Real-Time Safety Layer}

To synthesize a safe velocity $\boldsymbol{\mu}_\mathrm{s}$ that renders $\mathcal{C}'(\theta,t_0)$ forward invariant, we filter the planned action $\boldsymbol{\mu}_\mathrm{p}$ from the predictive layer via the \textit{input-to-state safe} (ISSf) CBF-QP:
\begin{align}
 \boldsymbol{\mu}_\mathrm{s} = &\argmin_{\boldsymbol{\mu} \in \mathbb{R}^3} &&\|\boldsymbol{\mu} - \boldsymbol{\mu}_\mathrm{p}\|_2^2 \nonumber\\
    & \quad 
 \mathrm{s.t.} && \!\!\!\nabla h \cdot \boldsymbol{\mu}+ \frac{\partial h}{\partial t}\geq -\alpha h + \frac{1}{\varepsilon}\|\nabla h\|^2, \label{eq: our-Issf}
\end{align}
where  $\frac{1}{\varepsilon}\|\nabla h\|^2$ is a robustness term compensating for the \textit{tracking error} between the ROM and full-order system \cite{compton25a, cohen2024safety}. We denote the safe ROM command by $\boldsymbol{\mu}_s \in \mathbb{R}^3$, and define the corresponding desired velocity as
$\dot{\bch}_\mathrm{s} \coloneqq \boldsymbol{\mu}_\mathrm{s}.$

This layered architecture interprets safety-critical control as certifying the ability of the full-order robotic system to track the velocity commands $\dot{\bch}_\mathrm{s}$ generated by the ROM \eqref{eq: single integrator}. To demonstrate this rigorously, consider the full-order mechanical system with state $(\mathbf{q}, \dot{\mathbf{q}}) \in \mathbb{R}^n \times \mathbb{R}^n$ representing all generalized coordinates and their velocities, governed by:
%
% To achieve this, we employ a layered architecture that interprets safety-critical control as certifying the ability of the full-order robotic system to track centroid velocity commands generated by the ROM \eqref{eq: single integrator}.
%
% To demonstrate this, consider the full-order mechanical system with generalized coordinates $\mathbf{q} \in \mathbb{R}^n$ (with $n \gg 3$) representing all joint angles and velocities $\dot{\mathbf{q}} \in \mathbb{R}^n$, governed by:
\begin{align}
\label{eq: mechanical system}
M(\mathbf{q})\ddot{\mathbf{q}} + C(\mathbf{q}, \dot{\mathbf{q}})\dot{\mathbf{q}} +G(\mathbf{q}) = \mathbf{u},
\end{align}
where $M(\mathbf{q}) \in \mathbb{R}^{n \times n}$, $C(\mathbf{q}, \dot{\mathbf{q}})\in \mathbb{R}^{n \times n}$, $G(\mathbf{q})\in \mathbb{R}^{n}$ denote the inertia matrix, Coriolis matrix, and gravity terms, respectively, and $\mathbf{u} \in \mathbb{R}^n$ is the generalized input. 
The ROM state $\boldsymbol{\chi}$ is obtained from the full configuration via an \textit{output map} $\boldsymbol{\chi} = \varphi(\mathbf{q})$ where $\varphi: \mathbb{R}^n \to \mathbb{R}^2 \times \mathbb{S}^1$ represents the relationship between generalized coordinates and ROM pose \cite{cohen2025safety}. Safety for the full-order system is guaranteed provided the true velocity $\dot{\boldsymbol{\chi}} = \frac{\partial \varphi}{\partial \mathbf{q}}\dot{\mathbf{q}}$ tracks the safe velocity $\dot{\bch}_\mathrm{s}$ sufficiently well, which we establish in the following:

\begin{theorem}\label{eq: main-thm} (Full-Order System Safety) 
Consider the full-order system \eqref{eq: mechanical system} and $T>0$. Let $h:\mathbb{R}^2 \times \mathbb{S}^1 \times [0,T] \rightarrow \mathbb{R}$ be a CBF for the ROM \eqref{eq: single integrator}, let $\boldsymbol{\chi}_\mathrm{s}:[0,T] \rightarrow \mathbb{R}^3$ be a safe velocity satisfying \eqref{eq: our-Issf} for some $\varepsilon >0$, and let $V:\mathbb{R}^n \times \mathbb{R}^n \times \mathbb{R}^3 \times [0,T] \rightarrow \mathbb{R}$ be a tracking control Lyapunov function (CLF) satisfying:
\begin{align} 
    V(\mathbf{q}, \dot{\mathbf{q}}, \dot{\boldsymbol{\chi}}_\mathrm{s}, t) &\geq \beta \|\dot{\boldsymbol{\chi}} - \dot{\boldsymbol{\chi}}_\mathrm{s}\|^2, \label{eq:Lyapunov_bound}\\ 
    \dot{V}(\mathbf{q}, \dot{\mathbf{q}}, \dot{\boldsymbol{\chi}}_\mathrm{s}, \ddot{\boldsymbol{\chi}}_\mathrm{s}, \mathbf{k}, t) &\leq -\lambda V(\mathbf{q}, \dot{\mathbf{q}}, \dot{\boldsymbol{\chi}}_\mathrm{s}, t), \label{eq: clf bound}
\end{align}
for some $\beta, \lambda >0$ and tracking controller $\mathbf{u} = \mathbf{k}(\mathbf{q}, \dot{\mathbf{q}}, \dot{\boldsymbol{\chi}}_\mathrm{s},t)$. Define the barrier candidate $B: \re^n \times \re^n \times [0,T] \rightarrow \re$:
\begin{align}\label{eq: full order CBF}
B(\mathbf{q}, \dot{\mathbf{q}}, t) = h(\varphi(\mathbf{q}), t) - \frac{1}{\mu} V(\mathbf{q}, \dot{\mathbf{q}}, \dot{\boldsymbol{\chi}}_\mathrm{s}, t)
\end{align}
with $\mu>0$. If the following condition holds: 
\begin{align} \label{eq:thm1_bound}
    \lambda \geq \alpha + \frac{\varepsilon \mu}{4 \beta},
\end{align}
then $B$ is a CBF for the full-order system \eqref{eq: mechanical system}, rendering the time-varying set $\mathcal{C}_\mathrm{F}(t) = \{(\mathbf{q}, \dot{\mathbf{q}}) \in \mathbb{R}^{n} \times \mathbb{R}^{n} \, | \,  B(\mathbf{q}, \dot{\mathbf{q}}, t) \geq 0\}$ forward invariant for all $t \in [0,T]$.
\end{theorem}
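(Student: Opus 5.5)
We differentiate $B$ along closed-loop trajectories of \eqref{eq: mechanical system} under the tracking controller $\mathbf{u}=\mathbf{k}$ and show $\dot B \geq -\alpha B$. Forward invariance of $\mathcal{C}_\mathrm{F}(t)$ on $[0,T]$ then follows from the standard comparison-lemma argument for time-varying CBFs. Here $\alpha$ is taken as the linear gain appearing in \eqref{eq: our-Issf}.

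\textbf{Step 1 (derivative of $h$).} By the chain rule, the derivative of $h(\varphi(\mathbf{q}),t)$ is
\begin{align*}
\frac{d}{dt}h(\varphi(\mathbf{q}),t) = \nabla h\cdot \dot{\bch} + \frac{\partial h}{\partial t},
\end{align*}
with $\dot{\bch} = \frac{\partial \varphi}{\partial \mathbf{q}}\dot{\mathbf{q}}$. We add and subtract $\dot{\bch}_\mathrm{s}$ and apply the ISSf constraint \eqref{eq: our-Issf}, which $\dot{\bch}_\mathrm{s}=\bs{\mu}_\mathrm{s}$ satisfies. This gives
\begin{align*}
\frac{d}{dt}h \geq -\alpha h + \frac{1}{\varepsilon}\|\nabla h\|^2 + \nabla h\cdot(\dot{\bch}-\dot{\bch}_\mathrm{s}).
\end{align*}

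\textbf{Step 2 (absorbing the tracking error).} This is the key step. Cauchy--Schwarz and Young's inequality in the form $-ab \geq -\frac{1}{\varepsilon}a^2 - \frac{\varepsilon}{4}b^2$ give
\begin{align*}
\nabla h\cdot(\dot{\bch}-\dot{\bch}_\mathrm{s}) \geq -\frac{1}{\varepsilon}\|\nabla h\|^2 - \frac{\varepsilon}{4}\|\dot{\bch}-\dot{\bch}_\mathrm{s}\|^2.
\end{align*}
The first term exactly cancels the robustness margin from Step 1. We then use \eqref{eq:Lyapunov_bound} to bound $\|\dot{\bch}-\dot{\bch}_\mathrm{s}\|^2 \leq V/\beta$. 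Together these yield
\begin{align*}
\frac{d}{dt}h \geq -\alpha h - \frac{\varepsilon}{4\beta}V.
\end{align*}

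\textbf{Step 3 (combining with the CLF).} From \eqref{eq: clf bound}, $-\frac{1}{\mu}\dot V \geq \frac{\lambda}{\mu}V$. Therefore
\begin{align*}
\dot B \geq -\alpha h + \Big(\frac{\lambda}{\mu} - \frac{\varepsilon}{4\beta}\Big)V.
\end{align*}
We rewrite $-\alpha h = -\alpha B - \frac{\alpha}{\mu}V$, so
\begin{align*}
\dot B \geq -\alpha B + \frac{1}{\mu}\Big(\lambda - \alpha - \frac{\varepsilon\mu}{4\beta}\Big)V.
\end{align*}
Condition \eqref{eq:thm1_bound} and $V\geq 0$ (which follows from \eqref{eq:Lyapunov_bound}) make the last term nonnegative. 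Hence $\dot B \geq -\alpha B$ along the closed loop, so the input $\mathbf{k}$ witnesses the CBF condition and $B$ is a CBF.

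\textbf{Step 4 (conclusion and technical points).} Because $\dot B\geq -\alpha B$ holds along the closed loop, the comparison lemma gives $B(t)\geq e^{-\alpha t}B(0)\geq 0$ whenever $B(0)\geq 0$, which is forward invariance on $[0,T]$.

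The main obstacle is technical rather than in the algebra. We must justify the regularity needed for differentiation: $h\circ\varphi$ must be $C^1$, and $\dot{\bch}_\mathrm{s}$ must be differentiable in time so that $\dot V$ is defined. We should also check the gradient nondegeneracy condition at the boundary of $\mathcal{C}_\mathrm{F}$, or else appeal directly to the comparison argument, which does not need it.

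It is also worth noting that we only use $\dot V\leq -\lambda V$ under the specific controller $\mathbf{k}$. The conclusion is therefore that this $\mathbf{k}$ satisfies the CBF inequality, which suffices for the CBF definition since it bounds the supremum from below.
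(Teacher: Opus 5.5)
Your proposal is correct and matches the paper's proof step for step. Both arguments add and subtract $\dot{\bch}_\mathrm{s}$, invoke the ISSf constraint, and absorb the cross term by completing the square (your Young's inequality with weights $1/\varepsilon$ and $\varepsilon/4$). Both then bound $\|\dot{\bch}-\dot{\bch}_\mathrm{s}\|^2 \leq V/\beta$ and rewrite $-\alpha h = -\alpha B - \frac{\alpha}{\mu}V$, so that $\lambda \geq \alpha + \frac{\varepsilon\mu}{4\beta}$ gives $\dot B \geq -\alpha B$. Your remarks on $V \geq 0$, the comparison-lemma step, and the regularity and gradient-nondegeneracy conditions make explicit what the paper leaves implicit.
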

\begin{proof}
Taking the time derivative of \eqref{eq: full order CBF} and noting that $\dot{\boldsymbol{\chi}} = \frac{\partial \varphi}{\partial \mathbf{q}}\dot{\mathbf{q}}$ gives (suppressing function arguments for brevity):
\begin{align}
    \!\!\dot{B}(\bq, \dq, \bu,t) = \frac{\partial h}{\partial \boldsymbol{\chi}} \frac{\partial \varphi}{\partial \mathbf{q}}\dot{\mathbf{q}} + \frac{\partial h}{\partial t} -  \frac{1}{\mu}\dot{V}(\bq, \dot \bq_s, \ddot \bq_s, \bu, t)
\label{eq: Bdot}
\end{align}
where:
\[
\dot V = \frac{\partial V}{\partial \bq}\,\dq
  + \frac{\partial V}{\partial \dq}\, \ddq
  + \frac{\partial V}{\partial \dot{\boldsymbol{\chi}}_s}\,\ddot{\boldsymbol{\chi}}_s
  + \frac{\partial V}{\partial t},
\]
and \eqref{eq: mechanical system} yields an affine dependence of $\ddq$ on $\bu$. Applying the tracking controller $\bu = \bk(\bq, \dot \bq, \dot{\bch}_s,t)$ satisfying \eqref{eq: clf bound} and lower bounding $\dot{B}$ along the closed loop system yields:

\begin{align}
    \dot{B} &\geq \frac{\partial h}{\partial \bch} \, \left(\dot{\bch} + \dot{\bch}_s- \dot{\bch}_s\right)+ \frac{\partial h}{\partial t} +  \frac{\lambda}{\mu}V\\
    &\geq -\alpha h + \frac{1}{\varepsilon }\|\nabla h\|^2 - \left\|\frac{\partial h}{\partial \bch}\right\| \left\| \dot{\bch}- \dot{\bch}_s\right\| +  \frac{\lambda}{\mu}V \\
    &\geq -\alpha h - \frac{\varepsilon}{4} \left\| \dot{\bch}- \dot{\bch}_s \right\|^2 +  \frac{\lambda}{\mu}V \\
    &\geq -\alpha h - \frac{\varepsilon}{4 \beta} V +  \frac{\lambda}{\mu}V\\
    &= -\alpha B + \frac{1}{\mu}\left(\lambda - \alpha  -\frac{\varepsilon \mu }{4 \beta} \right)V\\
    &\geq -\alpha B. 
\end{align}
% as the acceleration $\ddq$ depends affinely on $\bu$ from \eqref{eq: mechanical system}. 
% such that
% $\dot V$ depends on $\bu$ through the factor
% \[
% \frac{\partial V}{\partial \dq}\,M(\bq)^{-1}B(\bq)\,\bu.
% \]

The first inequality follows from substituting \eqref{eq: clf bound} into \eqref{eq: Bdot} and adding zero. Then \eqref{eq: our-Issf} lower bounds this inequality. Completing the squares and lower bounding yields the third inequality. The fourth is reached with \eqref{eq:Lyapunov_bound}. Using \eqref{eq: full order CBF}, the last equality is reached, which is bounded with \eqref{eq:thm1_bound}.
\end{proof}

\begin{remark}[Verification]
Although Theorem~\ref{eq: main-thm} assumes the existence of a Lyapunov function $V$ and a tracking controller $\mathbf{k}$, explicit expressions are not required to implement our ROM-based safety filter. They serve as tools for verification, i.e., 
one can certify that a given full-order system tracks the ROM commands by providing bounds on $V$ and stability properties of $\bk$.
When such bounds are difficult to obtain, 
(e.g., when $V$ is unknown and $\bk$ is implemented as a black-box or learned module in the existing low-level control architecture), 
Theorem~\ref{eq: main-thm} can still be met in practice by initializing $\alpha$ and $\varepsilon$ in \eqref{eq: our-Issf} at small values to ensure conservative safety, and adjusting until adequate performance is attained.
\end{remark}

% \subsection{Multistage Safety Filter}
% ... % Continuation of methods: MPC + CBF

\section{Multistage Safety Filter: Analysis}\label{sect:experiment}

The previous section established our architecture: PSF synthesis, predictive safe planning, and real-time safety filtering. In this section, we demonstrate how combining these layers in a \textit{multistage} architecture achieves favorable tradeoffs between performance and robustness,
% task optimality and safety robustness, '
compared to either layer operating independently.
%
% This separation targets two objectives simultaneously during collision avoidance: optimality through planning and robustness through instantaneous safety. 
%
We examine these tradeoffs in a detailed Pareto optimality analysis and validate them through experiments on quadruped and humanoid hardware.

\subsection{Pareto Optimality Framework}

We characterize performance and robustness tradeoffs by examining \textit{task optimality} (deviation from desired behavior) and \textit{safety robustness} (minimum safety margin). We formalize these objectives through two metrics evaluated over the horizon $N$, spanning an isolated collision avoidance scenario:
\begin{align*}
    J_{\text{robustness}} &= \min_{i\in[0,N]} h_0(\bch(t_i),t_i), \\
    J_{\text{optimality}} &= \frac{\sum_{i=0}^{N} (\bs{\nu}_i-\bs{\mu}_d)^\top R(\bs{\nu}_i-\bs{\mu}_d)}{J_{\text{ideal}}},
\end{align*}
where $J_\text{robustness}$ is the minimum value of the PSF $h_0$ during the maneuver, and $J_{\text{optimality}}$ is the ratio between the total measured control effort of the maneuver $\nu$ and the minimum ideal effort $J_{\text{ideal}}$. 
The ideal cost $J_{\text{ideal}}$ is computed by simulating an equivalent scenario with \textit{a priori} known obstacles, providing a benchmark for minimum total safe control effort $\nu^*$ for the state-constrained optimal control problem: 
\begin{align*} 
    J_{\text{ideal}} = \sum_{i=0}^{N} (\bs{\nu}^*_i-\bs{\mu}_d)^\top R(\bs{\nu}^*_i-\bs{\mu}_d). 
\end{align*} 
Since these objectives cannot be simultaneously optimized, we introduce a Pareto front to characterize tradeoffs. A solution is \textit{Pareto optimal} if no other solution improves one objective without degrading the other more. The Pareto optimal solution is found by maximizing a\textit{ welfare} function, defined as the weighted sum of utilities, 
\begin{align*}
    \mathbf{z}^{*} =
    \argmax_{\mathbf{z}\in\mathcal{Z}}
    \left(
    a_O u_O(\mathbf{z})
    +
    a_R u_R(\mathbf{z})
    \right),
\end{align*}

\noindent where $\mathbf{z}$ denotes a candidate safety filter architecture, $\mathcal{Z}$ is the set of evaluated architectures, and the utility functions, $u_O(\mathbf{z})$ and  $u_R(\mathbf{z})$, quantify the performance of the method $\mathbf{z}$ along the optimality and robustness metrics, respectively. The scalar weights $a_O, a_R > 0$ specify the relative importance of each objective. The weights are set to $a_O=a_R=0.5$, assigning equal importance to optimality and robustness.

% Fig.\;\ref{fig:pareto} illustrates the theorized position of each controller on the front.

\subsection{Individual Layer Tradeoffs}

When operating independently, each safety layer prioritizes different objectives at the expense of the other, thus occupying extreme positions on the Pareto front.

\subsubsection{Predictive Layer} This layer optimizes a finite-horizon cost, promoting actions that reduce total effort. Long-horizon strategies achieve high optimality by sacrificing immediate safety margins (e.g., side-stepping to avoid obstacles).
\subsubsection{Real-Time Layer}
The CBF-QP enforces instantaneous safety while minimizing deviations from nominal commands. This results in reactive behaviors (e.g., backing away from obstacles), improving robustness by reducing optimality.

\subsubsection{Multistage Architecture} The multistage design feeds the predictive layer command $\bs{\mu}_p$ to the real-time CBF-QP. The predictive layer provides long-horizon intent, while the real-time layer enforces the barrier condition by minimally modifying that intent before execution. This composition preserves the predictive layer's efficiency, while improving instantaneous safety guarantees via the real-time CBF-QP, yielding a Pareto-optimal solution between the two extremes.

\begin{figure}[t]
    \centering    
    \begin{subfigure}[b]{1.0\linewidth}
        \centering    
        \includegraphics[width=\linewidth]{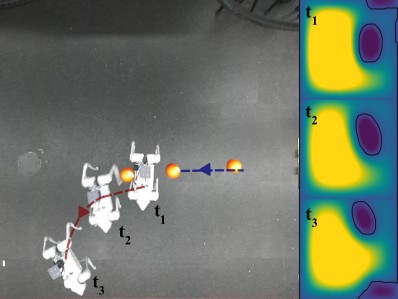}
    \end{subfigure}
    \begin{subfigure}[b]{1.0\linewidth}    
        \centering    
        \includegraphics[width=\linewidth]{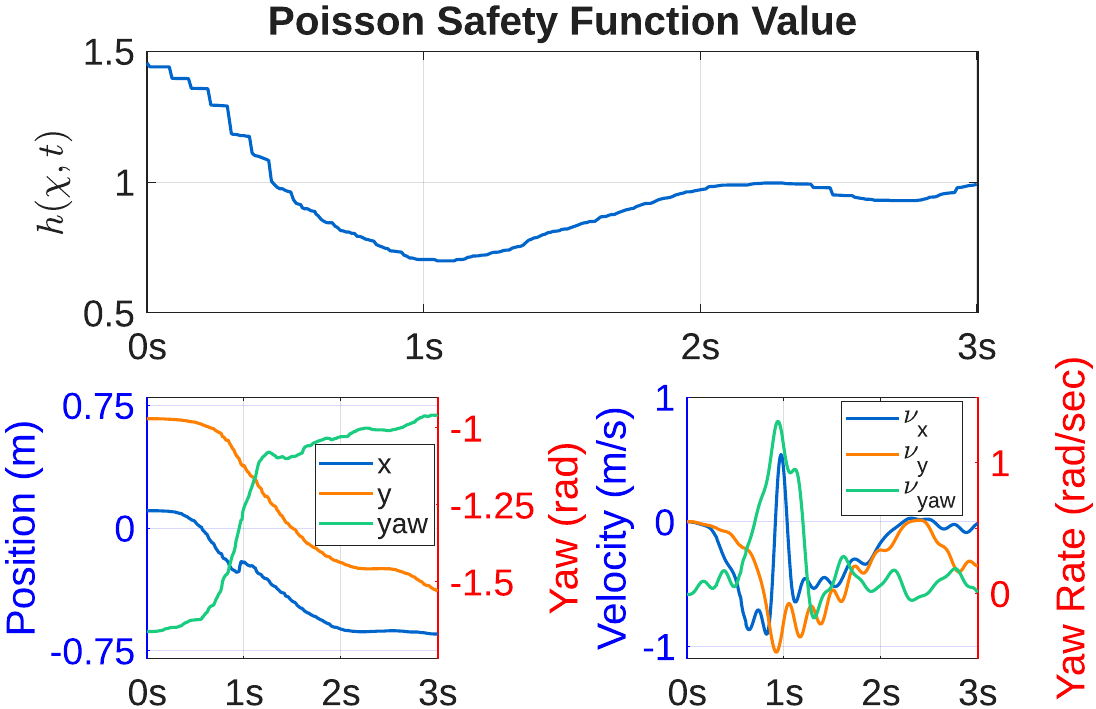}
    \end{subfigure}
   
    \caption{\footnotesize Dynamic Collision Avoidance Trial. \textbf{Top left} Quadruped avoiding a ball from right to left at three times. \textbf{Top right} Robocentric PSFs $h$ at each time instance. \textbf{Middle} The safety function value $h(x,y,\theta,t)$ used to compute $J_{\text{robustness}}$.  \textbf{Bottom Left} Measured position and yaw angle of the quadruped. \textbf{Bottom Right} Filtered velocity commands used to compute $J_{\text{optimality}}$. Footage of experiment trials is presented in the supporting video.}
    \label{fig:roll_instance}
    \vspace{-7mm}
\end{figure}

\begin{table*}[t]
\centering
\caption{\footnotesize Comparison of safety architectures during collision avoidance experiments. Metrics are reported as a mean and 85\% confidence interval (via Student's t-distribution). Within each experimental scenario, the highest lower confidence bound for each metric is in \textbf{bold}.}
\label{tab:ablation_metrics}
\scriptsize 
\setlength{\tabcolsep}{3pt}

\resizebox{\textwidth}{!}{
\begin{tabular}{l cccc cccc cccc} 
    \toprule
    \multirow{2}{*}{\textbf{Scenario}} & \multicolumn{4}{c}{\textbf{Predictive}} & \multicolumn{4}{c}{\textbf{Real-Time}} & \multicolumn{4}{c}{\textbf{Multistage}} \\
    \cmidrule(lr){2-5} \cmidrule(lr){6-9} \cmidrule(lr){10-13}
    & \textbf{Optimality} & \textbf{Robustness} & \textbf{Success \%} & \textbf{Welfare} & \textbf{Optimality} & \textbf{Robustness} & \textbf{Success \%} & \textbf{Welfare} & \textbf{Optimality} & \textbf{Robustness} & \textbf{Success \%} & \textbf{Welfare} \\
    \midrule
    1-Ball (Slow) & \textbf{0.127$\,\pm\,$0.017} & 0.375$\,\pm\,$0.048 & \textbf{100} &  0.466 & 0.065$\,\pm\,$0.005 & 0.673$\,\pm\,$0.020 & 90 & 0.419 & 0.105$\,\pm\,$0.008 & \textbf{0.682$\,\pm\,$0.021} & \textbf{100} & \textbf{0.610}\\
    1-Ball (Fast) & \textbf{0.201$\,\pm\,$0.018} & 0.153$^*$ & 60 & 0.414 & 0.136$\,\pm\,$0.027 & 0.203$\,\pm\,$0.099 & 30 & 0.192 & 0.174$\,\pm\,$0.022 & \textbf{0.235$\,\pm\,$0.043} & \textbf{100} & \textbf{0.485}\\
    2-Ball & \textbf{0.182$\,\pm\,$0.031} & 0.005$\,\pm\,$0.032 & 70 & 0.325 & 0.129$\,\pm\,$0.015 & 0.070$\,\pm\,$0.018 & 40 & 0.211& 0.158$\,\pm\,$0.028 & \textbf{0.087$\,\pm\,$0.014} & \textbf{90} & \textbf{0.377} \\
    3-Ball & \textbf{0.077$\,\pm\,$0.023} & -0.082$\,\pm\,$0.0411 & 70 & 0.085 & 0.058$\,\pm\,$0.011 & \textbf{0.007$\,\pm\,$0.072} & 50 & 0.070 & 0.055$\,\pm\,$0.007 & -0.018$\,\pm\,$0.055& \textbf{80} & \textbf{0.101} \\
    Humanoid & 0.053$\,\pm\,$0.008 & 0.399$\,\pm\,$0.083 & 60 & 0.141 & 0.057$\,\pm\,$0.004 & 0.399$\,\pm\,$0.069 & \textbf{100} & 0.327 & \textbf{0.065$\,\pm\,$0.008} & \textbf{0.530$\,\pm\,$0.048}& \textbf{100} & \textbf{0.505} \\
    \bottomrule
    \vspace{-2mm}
\end{tabular}

} 
\parbox[t]{\textwidth}{\footnotesize$^*$ For these experiments, $h$ could not be accurately measured during the ball's closest approach. Thus, the minimum observed value of $h$ is reported as a conservative upper bound. The true value for each experimental trial was lower, but unmeasurable.}
\vspace{-4mm}
\end{table*}

% \subsection{Demonstrating Pareto Tradeoffs}
\subsection{Experimental Validation}
Our multistage safety filter was validated using a Unitree Go2 quadruped by comparing three controllers: the isolated predictive filter, the isolated real-time safety filter, and the proposed multistage safety filter. The environment was perceived with two onboard LiDARs. Experiments comprised scenarios where one, two, and three dodgeballs were rolled at the robot. For one-ball experiments, a ball was rolled toward the robot at both 1.0~m/s (slow) and 1.5~m/s (fast). The two-ball experiment introduced a second ball from the opposing side, both at 1.0~m/s. Lastly, three-ball experiments added a ball approaching the front of the robot, with all three rolling at 1.5~m/s. Fig.\;\ref{fig:roll_instance} shows a representative experiment.

Table \ref{tab:ablation_metrics} reports the optimality and robustness metrics across all experiments. Table \ref{tab:ablation_metrics} also reports the avoidance success rate and welfare. Utility associated with each metric is computed by normalizing the values and taking the lower bound of its 85\% confidence interval. While $u_O$ is computed only from successful trials, $u_R$ incorporates failures by assigning trials a normalized robustness score of 0. 

The multistage safety filter consistently achieved the highest success rates and welfare scores for all experiments, supporting its Pareto optimality. The Pareto front is visualized in Fig. \ref{fig:avoidance_all} for the slow ball scenario using Hotelling's T-squared distribution. The predictive filter achieved high optimality at the cost of robustness. The real-time safety filter had low optimality with high robustness. A state-constrained MPC baseline was also evaluated for this first scenario; however, it lacked the necessary robustness for dynamic collision avoidance, failing all 10 trials. Consequently, the performance metrics could not be meaningfully computed. 

As avoidance difficulty increased, failures became more frequent. The predictive filter fails due to its low control rate, limiting its robustness in dynamic environments. Meanwhile, failures of the real-time filter arise from its myopic nature in conjunction with model mismatch (the single integrator model does not account for input limits). At higher speeds, obstacles exceed input limits and collide with the robot. 

Finally, because our ROM-based framework does not make assumptions on the underlying full-order model,  the control architecture can be deployed on additional robotic platforms. Thus,  we implemented it on the Unitree G1 humanoid robot using only the built-in LiDAR for perception. A yoga ball was rolled toward the humanoid robot at 1.1~m/s, and the multistage safety filter again achieved the highest success rate and welfare score (Table \ref{tab:ablation_metrics}), verifying the Pareto optimality tradeoffs persist across robotic platforms. Our multistage architecture also outperformed the other two methods in optimality and robustness metrics. 

% The Pareto front is visualized for the slow single ball experiment. For each avoidance, $J_{\mathrm{robustness}}$ (x-axis) and $J_{\mathrm{optimality}}$ (y-axis) is plotted, along with the joint confidence ellipse of the population mean using the Hotteling's T-squared distribution at 85\% confidence level (Fig.\;\ref{fig:avoidance_all}). The ellipse shows an inverse trend, showing the expected performance and robustness tradeoffs. The predictive layer tends to be more optimal but less robust, and the real-time layer has the lowest optimality, but is much more robust.\\
\begin{figure}
    \centering
    \includegraphics[width=0.75\linewidth]{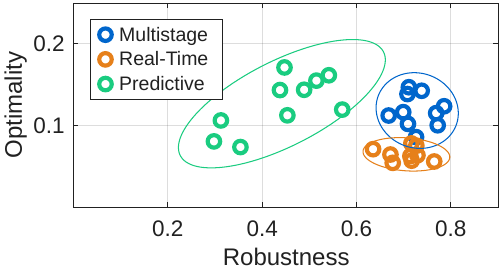}
    \vspace{-1mm}
    \caption{\footnotesize Analysis of Avoidance Experiment. Evaluated optimality and robustness metrics of collision avoidance with ball speeds of 1.0~m/s, with a confidence ellipse of the population mean. The results show tradeoffs in the metrics, forming an approximation of a Pareto front.}
    \label{fig:avoidance_all}
    \vspace{-4mm}
\end{figure}

\begin{figure}[]
    \centering
    \includegraphics[width=1.0\linewidth]{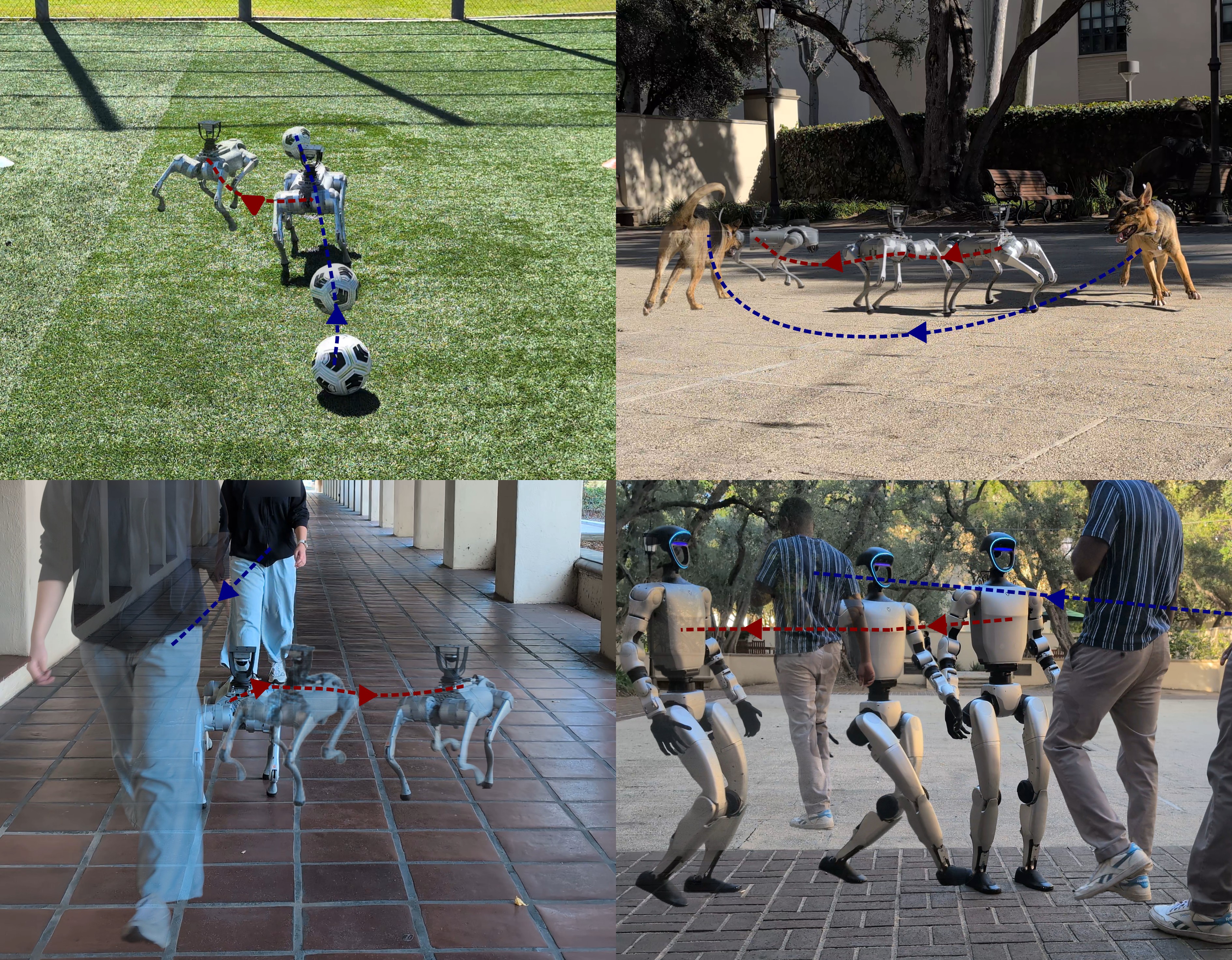}
    \caption{\footnotesize Collision Avoidance in Practical Settings. \textbf{Top Left} Quadruped avoids a soccer ball on a turf field. \textbf{Top Right} Quadruped avoids a persuant dog. \textbf{Bottom Left} Quadruped avoids a pedestrian. \textbf{Bottom Right} Humanoid avoids a human. Obstacle trajectories are indicated in blue, while avoidance trajectories are shown in red. Footage is available in the supplemental video.}
    \label{fig:generalized}
    \vspace{-7mm}
\end{figure}

\subsection{Generality of Multistage Safety Architecture}

To demonstrate the generality of our safety architecture across various environments and obstacles, we deployed it in multiple unstructured real-world scenarios. Examples are depicted in Fig.\;\ref{fig:generalized}. Fig.\;\ref{fig:generalized} (top left) shows the Unitree Go2 quadruped successfully avoiding a rolling ball on a slippery turf field. Fig.\;\ref{fig:generalized} (top right) shows the quadruped safely avoiding an approaching dog, (bottom left) shows safe collision avoidance as the quadruped and human approach one another, and (bottom right) shows the humanoid successfully avoiding collisions with an approaching person, demonstrating applicability across different dynamic obstacles.

\section{CONCLUSIONS}\label{sect:conclusion}

We have presented a multistage architecture for dynamic collision avoidance on multiple robotic platforms. We provided an occupancy mapping algorithm to synthesize safety constrainta via PSFs, which we incorporated into two safety layers: 1) a predictive MPC layer, and 2) a real-time CBF-QP safety filter.
% We have presented a multistage architecture for dynamic collision avoidance. We provided an occupancy mapping algorithm to incorporate perception-in-the-loop. Using this environment occupancy information, we synthesized safety constraints via PSFs, which we incorporated into two safety layers: 1) a predictive MPC layer, and 2) a real-time CBF-QP safety filter.
%
% The predictive layer's outputs served as nominal inputs to the real-time fi
%
We validated the optimality and robustness of this architecture via Pareto analysis on quadruped and humanoid platforms in dynamic environments.
Future work includes extensions to high-dimensional systems, multi-robot systems, and integration of  semantic scene understanding.

\bibliographystyle{ieeetr}
\bibliography{main-GB, erina, cohen}

% \input{appendix}

% \end{thebibliography}

\end{document}